\documentclass[twoside]{article}
\usepackage[accepted]{aistats2e}
\usepackage{amsmath}
\usepackage{amssymb}
\usepackage{amsthm}
\usepackage{bm}
\usepackage{dsfont}
\usepackage{epsfig}
\usepackage{float}
\usepackage{graphics}
\usepackage{theapa}
\usepackage{times}

\newcommand{\commentout}[1]{}

\newtheorem{lemma}{Lemma}
\newtheorem{proposition}{Proposition}

\renewcommand{\qed}{\rule{5pt}{5pt}}

\newcommand{\bx}{{\bf x}}

\newcommand{\by}{{\bf y}}

\newcommand{\balpha}{{\bm \alpha}}
\newcommand{\bell}{{\bm \ell}}

\newcommand{\cH}{\mathcal{H}}
\newcommand{\cL}{\mathcal{L}}

\newcommand{\eps}{\varepsilon}

\newcommand{\realset}{\mathbb{R}}

\newcommand{\abs}[1]{\left|#1\right|}

\newcommand{\E}[2]{\mathbb{E}_{#1} \! \left[#2\right]}

\newcommand{\I}[1]{\mathds{1} \! \left\{#1\right\}}

\newcommand{\normw}[2]{\left\|#1\right\|_{#2}}

\newcommand{\set}[1]{\left\{#1\right\}}
\newcommand{\sgn}{\mathrm{sgn}}

\newcommand{\transpose}{^\mathsf{\scriptscriptstyle T}}

\begin{document}

\twocolumn[

\aistatstitle{Semi-Supervised Learning with Max-Margin Graph Cuts}

\aistatsauthor{Branislav Kveton \And Michal Valko \And Ali Rahimi \and Ling Huang}

\aistatsaddress{Intel Labs Santa Clara \And University of Pittsburgh \And Intel Labs Berkeley}]

\begin{abstract}
This paper proposes a novel algorithm for semi-supervised learning. This algorithm learns graph cuts that maximize the margin with respect to the labels induced by the harmonic function solution. We motivate the approach, compare it to existing work, and prove a bound on its generalization error. The quality of our solutions is evaluated on a synthetic problem and three UCI ML repository datasets. In most cases, we outperform manifold regularization of support vector machines, which is a state-of-the-art approach to semi-supervised max-margin learning.
\end{abstract}

\section{INTRODUCTION}
\label{sec:introduction}

Semi-supervised learning is a field of machine learning that studies learning from both labeled and unlabeled examples. This learning paradigm is suitable for real-world problems, where data is often abundant but the resources to label them are limited. As a result, many semi-supervised learning algorithms have been proposed in the past years \shortcite{zhu08semisupervised}. The closest to this work are semi-supervised support vector machines (S3VMs) \shortcite{bennett99semisupervised}, manifold regularization of support vector machines (SVMs) \shortcite{belkin06manifold}, and harmonic function solutions on data adjacency graphs \shortcite{zhu03semisupervised}. Manifold regularization of SVMs essentially combines the ideas of harmonic function solutions and semi-supervised SVMs in a single convex objective.

This paper proposes a different way of combining these two ideas. First, we compute the harmonic function solution on the data adjacency graph and then, we learn a discriminator, which is conditioned on the labels induced by this solution. We refer to our method as \emph{max-margin graph cuts} because the discriminator maximizes the margin with respect to the inferred labels. The method has many favorable properties. For instance, it incorporates the kernel trick \shortcite{wahba99support}, it takes advantage of sparse data adjacency matrices, and its generalization error can be bounded. Moreover, it typically yields better results than manifold regularization of SVMs, especially for linear and cubic decision boundaries.

In addition to proposing a new algorithm, this paper makes two contributions. First, we show how manifold regularization of linear and cubic SVMs fails on almost a trivial problem. Second, we show how to make the harmonic function solution with soft labeling constraints \shortcite{cortes08stability} stable.

The paper is organized as follows. In Section \ref{sec:reg HFS}, we review the harmonic function solution and discuss how to regularize it to interpolate between supervised learning on labeled examples and semi-supervised learning on all data. In Section \ref{sec:MMGC}, we introduce our learning algorithm. The algorithm is compared to existing work in Section \ref{sec:existing work} and we bound its generalization error in Section \ref{sec:theoretical analysis}. In Section \ref{sec:experiments}, we evaluate the quality of our solutions on UCI ML repository datasets, and show that they usually outperform manifold regularization of SVMs.

The following notation is used in the paper. The symbols $\bx_i$ and $y_i$ refer to the $i$-th data point and its label, respectively. The data points are divided into labeled and unlabeled sets, $l$ and $u$, and labels $y_i \! \in \! \set{-1, 1}$ are observed for the labeled data only. The cardinality of the labeled and unlabeled sets is $n_l = \abs{l}$ and $n_u = \abs{u}$, respectively, and the total number of training examples is $n = n_l + n_u$.

\section{REGULARIZED HARMONIC FUNCTION SOLUTION}
\label{sec:reg HFS}

In this section, we review the harmonic function solution of \shortciteA{zhu03semisupervised}. Moreover, we show how to regularize it to interpolate between semi-supervised learning on all data and supervised learning on labeled examples.

A standard approach to semi-supervised learning on graphs is to minimize the quadratic objective function:
\begin{align}
  \min_{\bell \in \realset^n} & \quad
  \bell\transpose L \bell \label{eq:HFS} \\
  \textrm{s.t.} & \quad
  \ell_i = y_i \textrm{ for all } i \in l; \nonumber
\end{align}
where $\bell$ denotes the vector of predictions, $L = D - W$ is the Laplacian of the data adjacency graph, which is represented by a matrix $W$ of pairwise similarities $w_{ij}$, and $D$ is a diagonal matrix whose entries are given by $d_i = \sum_j w_{ij}$. This problem has a closed-form solution:
\begin{align}
  \bell_u = (D_{uu} - W_{uu})^{-1} W_{ul} \bell_l,
  \label{eq:closed-form HFS}
\end{align}
which satisfies the \emph{harmonic property} $\ell_i = \frac{1}{d_i} \sum_{j \sim i} w_{ij} \ell_j$, and therefore is commonly known as the \emph{harmonic function solution}. Since the solution can be also computed as:
\begin{align}
  \bell_u = (I - P_{uu})^{-1} P_{ul} \bell_l,
  \label{eq:random walk HFS}
\end{align}
it can be viewed as a product of a random walk on the graph $W$ with the transition matrix $P = D^{-1} W$. The probability of moving between two arbitrary vertices $i$ and $j$ is $w_{ij} / d_i$, and the walk terminates when the reached vertex is labeled. Each element of the solution is given by:
\begin{align}
  \ell_i
  \ = & \ \ (I - P_{uu})_{iu}^{-1} P_{ul} \bell_l \nonumber \\
  \ = & \ \
  \underbrace{\sum_{j: y_j = 1} (I - P_{uu})_{iu}^{-1}
  P_{uj}}_{p_i^1} -
  \underbrace{\sum_{j: y_j = -1} (I - P_{uu})_{iu}^{-1}
  P_{uj}}_{p_i^{-1}} \nonumber \\
  \ = & \ \ p_i^1 - p_i^{-1},
  \label{eq:probability HFS}
\end{align}
where $p_i^1$ and $p_i^{-1}$ are probabilities by which the walk starting from the vertex $i$ ends at vertices with labels $1$ and $-1$, respectively. Therefore, when $\ell_i$ is rewritten as $\abs{\ell_i} \sgn(\ell_i)$, $\abs{\ell_i}$ can be interpreted as a \emph{confidence} of assigning the label $\sgn(\ell_i)$ to the vertex $i$. The maximum value of $\abs{\ell_i}$ is 1, and it is achieved when either $p_i^1 = 1$ or $p_i^{-1} = 1$. The closer the confidence $\abs{\ell_i}$ to 0, the closer the probabilities $p_i^1$ and $p_i^{-1}$ to 0.5, and the more \emph{uncertain} the label $\sgn(\ell_i)$.

To control the confidence of labeling unlabeled examples, we suggest regularizing the Laplacian $L$ as $L + \gamma_g I$, where $\gamma_g$ is a scalar and $I$ is the identity matrix. Similarly to our original problem (\ref{eq:HFS}), the corresponding harmonic function solution:
\begin{align}
  \min_{\bell \in \realset^n} & \quad
  \bell\transpose (L + \gamma_g I) \bell \label{eq:reg HFS} \\
  \textrm{s.t.} & \quad
  \ell_i = y_i \textrm{ for all } i \in l \nonumber
\end{align}
can be computed in a closed form:
\begin{align}
  \bell_u = (L_{uu} + \gamma_g I)^{-1} W_{ul} \bell_l.
  \label{eq:closed-form reg HFS}
\end{align}
It can be also interpreted as a random walk on the graph $W$ with an extra sink. At each step, this walk may terminate at the sink with probability $\gamma_g / (d_i + \gamma_g)$. Therefore, the scalar $\gamma_g$ essentially controls how the confidence $\abs{\ell_i}$ of labeling unlabeled vertices decreases with the number of hops from labeled vertices.

Several examples of how $\gamma_g$ affects the regularized solution are shown in Figure \ref{fig:HFS}. When $\gamma_g = 0$, the solution turns into the ordinary harmonic function solution. When $\gamma_g \! = \! \infty$, the confidence of labeling unlabeled vertices \mbox{decreases to zero.} Finally, note that our regularization corresponds to increasing all eigenvalues of the Laplacian $L$ by $\gamma_g$ \shortcite{smola03kernels}. In Section \ref{sec:theoretical analysis}, we use this property to bound the generalization error of our solutions.

\section{MAX-MARGIN GRAPH CUTS}
\label{sec:MMGC}

Our semi-supervised learning algorithm involves two steps. First, we obtain the regularized harmonic function solution $\bell^\ast$ (Equation \ref{eq:closed-form reg HFS}). The solution is computed from the system of linear equations $(L_{uu} + \gamma_g I) \bell_u = W_{ul} \bell_l$. This system of linear equations is sparse when the data adjacency graph $W$ is sparse. Second, we learn a max-margin discriminator, which is conditioned on the labels induced by the harmonic solution. The optimization problem is given by:
\begin{align}
  \min_{f \in \cH_K} & \quad \sum_{i : \abs{\ell_i^\ast} \geq \eps}
  \!\!\! V(f, \bx_i, \sgn(\ell_i^\ast)) + \gamma \normw{f}{K}^2
  \label{eq:MMGC} \\
  \textrm{s.t.} & \quad
  \bell^\ast = \arg\min_{\bell \in \realset^n}
  \bell\transpose (L + \gamma_g I) \bell \nonumber \\
  & \quad \textrm{s.t.} \ \ell_i = y_i \textrm{ for all } i \in l;
  \nonumber
\end{align}
where $V(f, \bx, y) = \max\{1 - y f(\bx), 0\}$ denotes the \emph{hinge loss}, $f$ is a function from some \emph{reproducing kernel Hilbert space (RKHS)} $\cH_K$, and $\normw{\cdot}{K}$ is the norm that measures the complexity of $f$.

Training examples $\bx_i$ in our problem are selected based on our confidence into their labels. When the labels are highly \emph{uncertain}, which means that $\abs{\ell_i^\ast} < \eps$ for some small $\eps \geq 0$, the examples are excluded from learning. Note that as the regularizer $\gamma_g$ increases, the values $\abs{\ell_i^\ast}$ decrease towards 0 (Figure \ref{fig:HFS}), and the $\eps$ thresholding allows for smooth interpolations between supervised learning on labeled examples and semi-supervised learning on all data. The tradeoff between the regularization of $f$ and the minimization of hinge losses $V(f, \bx_i, \sgn(\ell_i^\ast))$ is controlled by the parameter $\gamma$.

Due to the representer theorem \shortcite{wahba99support}, the optimal solution $f^\ast$ to our problem has a special form:
\begin{align}
  f^\ast(\bx) =
  \sum_{i : \abs{\ell_i^\ast} \geq \eps} \alpha_i^\ast k(\bx_i, \bx),
  \label{eq:MMGC representer}
\end{align}
where $k(\cdot, \cdot)$ is a Mercer kernel associated with the RKHS $\cH_K$. Therefore, we can apply the kernel trick and optimize rich classes of discriminators in a finite-dimensional space of $\balpha = (\alpha_1, \dots, \alpha_n)$. Finally, note that when $\gamma_g = \infty$, our solution $f^\ast$ corresponds to supervised learning with SVMs.

\begin{figure*}[t]
  \centering
  \includegraphics[width=4.8in, bb=1.25in 0in 7.25in 1.5in]{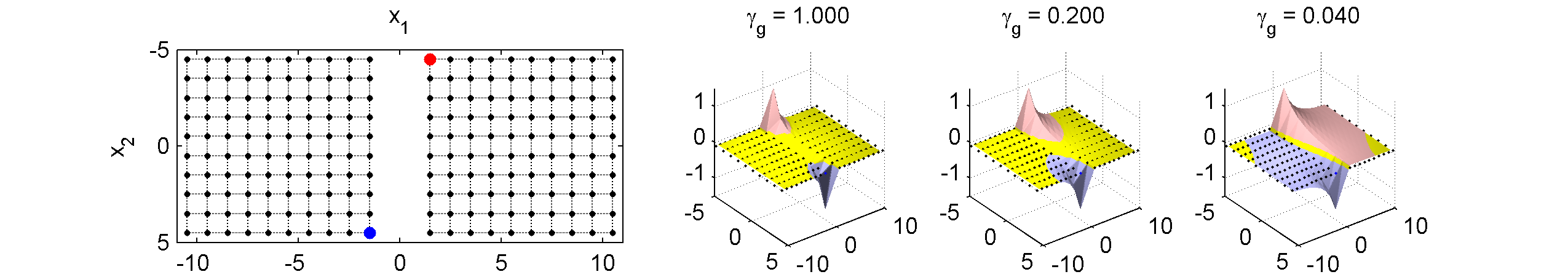}
  \vspace{0.1in} \\
  \hspace{-0.6in} \textbf{(a)} \hspace{2.55in} \textbf{(b)}
  \caption{\textbf{a.} An example of a simple data adjacency graph. The vertices of the graph are depicted as dots. The red and blue dots are labeled vertices. The edges of the graph are shown as dotted lines and weighted as $w_{ij} = \exp[- \normw{\bx_i - \bx_j}{2}^2 / 2]$. \textbf{b.} Three regularized harmonic function solutions on the data adjacency graph from Figure \ref{fig:HFS}a. The plots are cubic interpolations of the solutions. The pink and blue colors denote parts of the feature space $\bx$ where $\ell_i > 0$ and $\ell_i < 0$, respectively. The yellow color marks regions where the confidence $\abs{\ell_i}$ is less than 0.05.}
  \label{fig:HFS}
\end{figure*}

\section{EXISTING WORK}
\label{sec:existing work}

Most of the existing work on semi-supervised max-margin learning can be viewed as manifold regularization of SVMs \shortcite{belkin06manifold} or semi-supervised SVMs with the hat loss on unlabeled data \shortcite{bennett99semisupervised}. The two approaches are reviewed in the rest of the section.

\subsection{SEMI-SUPERVISED SVMS}
\label{sec:S3VM}

Semi-supervised support vector machines with the \emph{hat loss} $\widehat{V}(f, \bx) = \max\{1 - \abs{f(\bx)}, 0\}$ on unlabeled data \shortcite{bennett99semisupervised}:
\begin{align}
  \min_f \ \sum_{i \in l} V(f, \bx_i, y_i) +
  \gamma \normw{f}{K}^2 +
  \gamma_u \sum_{i \in u} \widehat{V}(f, \bx_i)
  \label{eq:S3VM}
\end{align}
compute max-margin decision boundaries that avoid dense regions of data. The hat loss makes the optimization problem non-convex. As a result, it is hard to solve the problem optimally and most of the work in this field has focused on approximations. A comprehensive review of these methods was done by \shortciteA{zhu08semisupervised}.

In comparison to semi-supervised SVMs, learning of max-margin graph cuts (\ref{eq:MMGC}) is a convex problem. The convexity is achieved by having a two-stage learning algorithm. First, we infer labels of unlabeled examples using the regularized harmonic function solution, and then, we minimize the corresponding convex losses.

\subsection{MANIFOLD REGULARIZATION OF SVMS}
\label{sec:MR}

Manifold regularization of SVMs \shortcite{belkin06manifold}:
\begin{align}
  \min_{f \in \cH_K} \ \sum_{i \in l} V(f, \bx_i, y_i) +
  \gamma \normw{f}{K}^2 +
  \gamma_u {\bf f}\transpose L {\bf f},
  \label{eq:MR}
\end{align}
where ${\bf f} = (f(\bx_1), \dots, f(\bx_n))$, computes max-margin decision boundaries that are smooth in the feature space. The smoothness is achieved by the minimization of the regularization term ${\bf f}\transpose L {\bf f}$. Intuitively, when two examples are close on a manifold, the minimization of ${\bf f}\transpose L {\bf f}$ leads to assigning the same label to both examples.

In some aspects, manifold regularization is similar to max-margin graph cuts. In particular, note that its objective (\ref{eq:MR}) is similar to the regularized harmonic function solution (\ref{eq:reg HFS}). Both objectives involve regularization by a manifold, ${\bf f}\transpose L {\bf f}$ and $\bell\transpose L \bell$, regularization in the space of learned parameters, $\normw{f}{K}^2$ and $\bell\transpose I \bell$, and some labeling constraints $V(f, \bx_i, y_i)$ and $\ell_i = y_i$. Since max-margin graph cuts are learned conditionally on the harmonic function solution, the problems (\ref{eq:MMGC}) and (\ref{eq:MR}) may sometimes have similar solutions. A necessary condition is that the regularization terms in both objectives are weighted in the same proportions, for instance, by setting $\gamma_g = \gamma / \gamma_u$. We adopt this setting when manifold regularization of SVMs is compared to max-margin graph cuts in Section \ref{sec:experiments}.

\subsection{MANIFOLD REGULARIZATION FAILS}
\label{sec:MR fails}

The major difference between manifold regularization (\ref{eq:MR}) and the regularized harmonic function solution (\ref{eq:reg HFS}) is in the space of optimized parameters. In particular, manifold regularization is performed on a class of functions $\cH_K$. When this class is severely restricted, such as linear functions, the minimization of ${\bf f}\transpose L {\bf f}$ may lead to results, which are significantly worse than the harmonic function solution.

This issue can be illustrated on the problem from Figure \ref{fig:HFS}, where we learn a linear decision boundary $f(\bx) = \alpha_1 x_1 + \alpha_2 x_2$ through manifold regularization of linear SVMs:
\begin{align}
  \min_{\alpha_1, \alpha_2} \ \sum_{i \in l} V(f, \bx_i, y_i) +
  \gamma [\alpha_1^2 + \alpha_2^2] +
  \gamma_u {\bf f}\transpose L {\bf f}.
  \label{eq:linear MR}
\end{align}
The structure of our problem simplifies the computation of the regularization term ${\bf f}\transpose L {\bf f}$. In particular, since all edges in the data adjacency graph are either horizontal or vertical, the term ${\bf f}\transpose L {\bf f}$ can be expressed as a function of $\alpha_1^2$ and $\alpha_2^2$:
\begin{align}
  {\bf f}\transpose L {\bf f}
  \ = & \ \ \frac{1}{2} \sum_{i, j} w_{ij} (f(\bx_i) - f(\bx_j))^2
  \nonumber \\
  \ = & \ \ \frac{1}{2} \sum_{i, j} w_{ij}
  (\alpha_1 (\bx_{i1} - \bx_{j1}) + \alpha_2 (\bx_{i2} - \bx_{j2}))^2
  \nonumber \\
  \ = & \ \ \frac{\alpha_1^2}{2} \underbrace{\sum_{i, j} w_{ij}
  (\bx_{i1} - \bx_{j1})^2}_{\Delta = 218.351} + \nonumber \\
  & \ \ \frac{\alpha_2^2}{2} \underbrace{\sum_{i, j} w_{ij}
  (\bx_{i2} - \bx_{j2})^2}_{\Delta = 218.351},
  \label{eq:linear manifold}
\end{align}
and incorporated in our objective function as an additional weight at the regularizer $[\alpha_1^2 + \alpha_2^2]$:
\begin{align}
  \min_{\alpha_1, \alpha_2} \ \sum_{i \in l} V(f, \bx_i, y_i) +
  \left(\gamma + \frac{\gamma_u \Delta}{2}\right)
  [\alpha_1^2 + \alpha_2^2].
  \label{eq:linear SVM}
\end{align}
Thus, manifold regularization of linear SVMs on our problem can be viewed as supervised learning with linear SVMs with a varying weight at the regularizer. Since the problem involves only two labeled examples, changes in the weight $\left(\gamma + \frac{\gamma_u \Delta}{2}\right)$ do not affect the direction of the discriminator $f^\ast(\bx) = 0$ and only change the slope of $f^\ast$ (Figure \ref{fig:synthetic cuts}).

The above analysis shows that the discriminator $f^\ast(\bx) = 0$ does not change with $\gamma_u$. As a result, all discriminators are equal to the discriminator for $\gamma_u = 0$, which can be learned by linear SVMs, and none of them solves our problem optimally. Max-margin graph cuts solve the problem optimally for small values of $\gamma_g$ (Figure \ref{fig:synthetic cuts}).

A similar line of reasoning can be used to extend our results to polynomial kernels. Figure \ref{fig:synthetic cuts} indicates that max-margin learning with the cubic kernel exhibits similar trends to the linear case.

\begin{figure*}
  \centering
  \includegraphics[width=4.8in, bb=1.25in 0.75in 7.25in 9.75in]{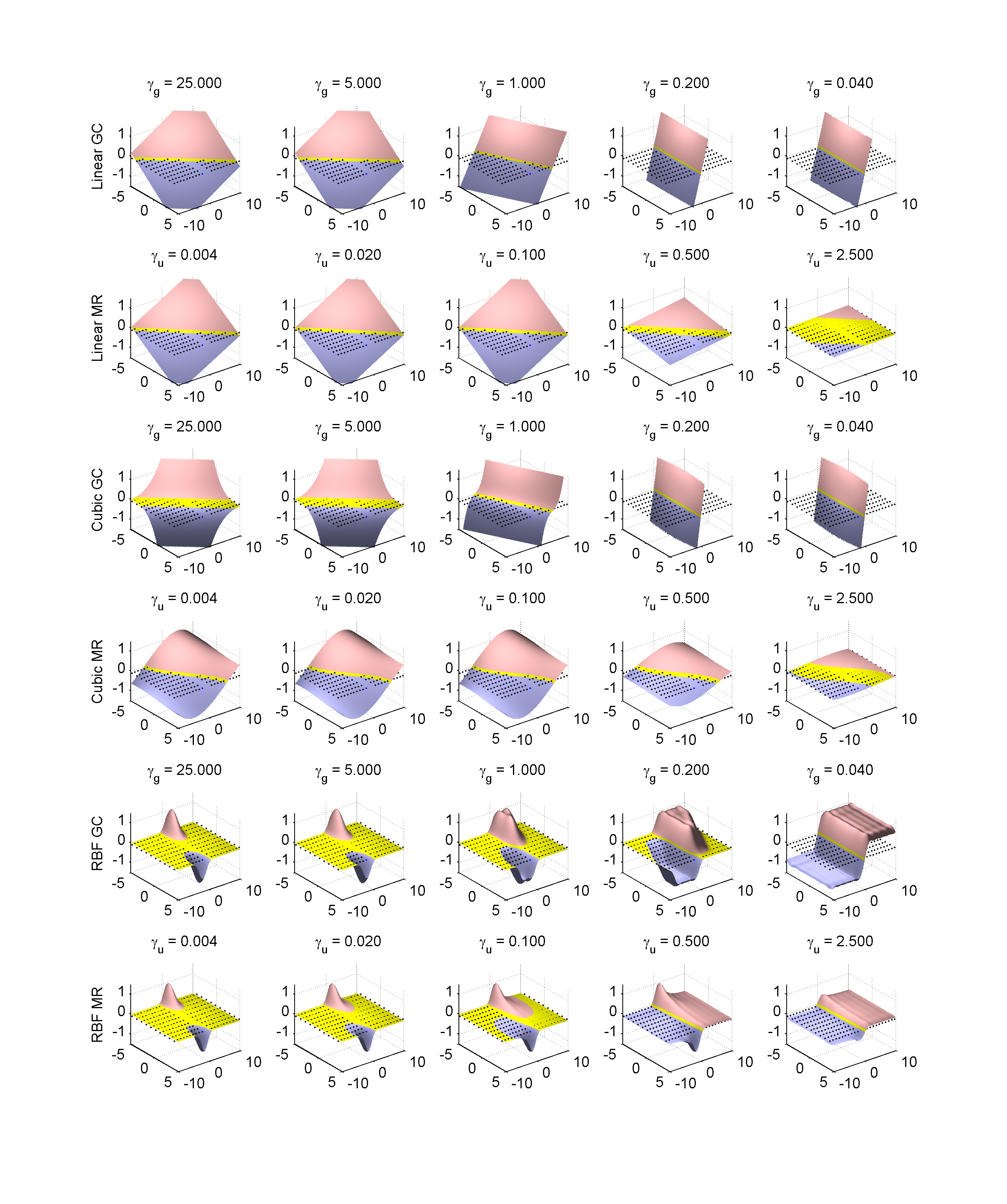}
  \caption{Linear, cubic, and RBF decision boundaries obtained by manifold regularization of SVMs (MR) and max-margin graph cuts (GC) on the problem from Figure \ref{fig:HFS}. The regularization parameter $\gamma_g = \gamma / \gamma_u$ is set as suggested in Section \ref{sec:MR}, $\gamma \! = \! 0.1$, and $\eps \! = \! 0.01$. The pink and blue colors denote parts of the feature space $\bx$ where the discriminators $f$ are positive and negative, respectively. The yellow color marks regions where $\abs{f(\bx)} < 0.05$.}
  \label{fig:synthetic cuts}
\end{figure*}

\section{THEORETICAL ANALYSIS}
\label{sec:theoretical analysis}

The notion of algorithmic stability can be used to bound the generalization error of many learning algorithms \shortcite{bousquet02stability}. In this section, we discuss how to make the harmonic function solution stable and prove a bound on the generalization error of max-margin cuts (\ref{eq:MMGC}). Our bound combines existing transductive \shortcite{belkin04regularization,cortes08stability} and inductive \shortcite{vapnik95nature} bounds.

\subsection{GENERALIZATION ERROR}
\label{sec:generalization error}

Our objective is to show that the \emph{risk} of our solutions $f$:
\begin{align}
  R_P(f) = \E{P(\bx)}{\cL(f(\bx), y(\bx))}
  \label{eq:risk}
\end{align}
is bounded by the \emph{empirical risk} on graph-induced labels:
\begin{align}
  \frac{1}{n} \sum_i \cL(f(\bx_i), \sgn(\ell_i^\ast))
  \label{eq:empirical risk}
\end{align}
and error terms, which can be computed from training data. The function $\cL(y', y) \! = \! \I{\sgn(y') \! \neq \! y}$ computes the zero-one loss of the prediction $\sgn(y')$ given the ground truth $y$, and $P(\bx)$ is the distribution of our data. For simplicity, we assume that the label $y$ is a deterministic function of $\bx$. Our proof starts by relating $R_P(f)$ and graph-induced labels $\ell_i^\ast$.

\begin{lemma}
\label{lem:inductive bound} Let $f$ be from a function class with the VC dimension $h$ and $\bx_i$ be $n$ examples, which are sampled i.i.d. with respect to the distribution $P(\bx)$. Then the inequality:
\begin{align*}
  R_P(f)
  \ \leq & \ \
  \frac{1}{n} \sum_i \cL(f(\bx_i), \sgn(\ell_i^\ast)) \ + \\
  & \ \
  \frac{1}{n} \sum_i (\ell_i^\ast - y_i)^2 + \\
  & \ \
  \underbrace{\sqrt{\frac{h (\ln(2 n / h) + 1) -
  \ln(\eta / 4)}{n}}}_{\emph{inductive error } \Delta_I(h, n, \eta)}
\end{align*}
holds with probability $1 - \eta$, where $y_i$ and $\ell_i^\ast$ represent the true and graph-induced soft labels, respectively.
\end{lemma}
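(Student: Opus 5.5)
The proof has two parts. First, the classical VC bound of Vapnik controls the true risk by the empirical risk measured against the \emph{true} labels. Second, a pointwise inequality swaps true labels for graph-induced ones at the price of the squared label error.

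For the first part, I would invoke the standard inductive bound \shortcite{vapnik95nature} for a class of VC dimension $h$ under the zero-one loss. Since the $\bx_i$ are i.i.d.\ from $P(\bx)$ and $y$ is a deterministic function of $\bx$, with probability at least $1-\eta$, simultaneously for all $f$ in the class,
\begin{align*}
  R_P(f) \ \leq \ \frac{1}{n}\sum_i \cL(f(\bx_i), y_i) + \Delta_I(h, n, \eta).
\end{align*}
This holds uniformly over the class, so it applies to the data-dependent $f$ produced by (\ref{eq:MMGC}). The constants match Vapnik's form $\sqrt{(h(\ln(2n/h)+1) - \ln(\eta/4))/n}$, so no computation is needed here beyond citing it.

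For the second part, I would prove the deterministic pointwise inequality
\begin{align*}
  \cL(f(\bx_i), y_i) \ \leq \ \cL(f(\bx_i), \sgn(\ell_i^\ast)) + (\ell_i^\ast - y_i)^2 .
\end{align*}
Because $\cL$ is a zero-one indicator, the triangle inequality for disagreement gives
\[
\I{\sgn(f(\bx_i)) \neq y_i} \leq \I{\sgn(f(\bx_i)) \neq \sgn(\ell_i^\ast)} + \I{\sgn(\ell_i^\ast) \neq y_i}.
\]
It then suffices to show $\I{\sgn(\ell_i^\ast) \neq y_i} \leq (\ell_i^\ast - y_i)^2$. If the signs agree, the left side is $0$. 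If they disagree, then, since $y_i \in \set{-1,1}$, the value $\ell_i^\ast$ lies on the opposite side of zero from $y_i$ or at zero. Hence $\abs{\ell_i^\ast - y_i} \geq 1$ and the square is at least $1$. Averaging over $i$ and combining with the first part yields the statement.

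The only delicate point is the convention at $\ell_i^\ast = 0$, where $\sgn(0)$ may be $0$ and so disagree with $y_i$. The inequality still holds there because $(0 - y_i)^2 = 1$. Two remarks on scope: the uniformity of the VC bound is what makes the learned $f$ admissible, and the lemma needs no bound on $\abs{\ell_i^\ast}$ such as $\abs{\ell_i^\ast} \leq 1$, since the argument uses only the sign disagreement.
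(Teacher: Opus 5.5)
Your proposal is correct and takes the same route as the paper. You invoke Vapnik's VC bound with the empirical risk measured against the true labels $y_i$, then bound each term pointwise via $\cL(f(\bx_i), y_i) \leq \cL(f(\bx_i), \sgn(\ell_i^\ast)) + (\ell_i^\ast - y_i)^2$. The paper only asserts this inequality for any $y_i \in \set{-1,1}$ and $\ell_i^\ast$, while you justify it through the indicator triangle inequality and the fact that a sign disagreement forces $\abs{\ell_i^\ast - y_i} \geq 1$; you also state explicitly the uniformity of the VC bound that the paper leaves implicit.
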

{\bf Proof:} Based on Equations 3.15 and 3.24 \shortcite{vapnik95nature}, the inequality:
\begin{align*}
  R_P(f) \leq \frac{1}{n} \sum_i \cL(f(\bx_i), y_i) +
  \Delta_I(h, n, \eta)
\end{align*}
holds with probability $1 - \eta$. Our final claim follows from bounding all terms $\cL(f(\bx_i), y_i)$ as:
\begin{align*}
  \cL(f(\bx_i), y_i) \leq \cL(f(\bx_i), \sgn(\ell_i^\ast)) +
  (\ell_i^\ast - y_i)^2.
\end{align*}
The above bound holds for any $y_i \in \set{-1, 1}$ and $\ell_i^\ast$. \qed

\bigskip It is hard to bound the error term $\frac{1}{n} \sum_i (\ell_i^\ast - y_i)^2$ when the constraints $\ell_i = y_i$ (\ref{eq:reg HFS}) are enforced in a hard manner. Thus, in the rest of our analysis, we consider a relaxed version of the harmonic function solution \shortcite{cortes08stability}:
\begin{align}
  \min_{\bell \in \realset^n} \
  (\bell - \by)\transpose C (\bell - \by) + \bell\transpose L \bell,
  \label{eq:soft HFS}
\end{align}
where $L$ is the Laplacian of the data adjacency graph, $C$ is a diagonal matrix such that $C_{ii} \! = \! c_l$ for all labeled examples, and $C_{ii} = c_u$ otherwise, and $\by$ is a vector of pseudo-targets such that $y_i$ is the label of the $i$-th example when the example is labeled, and $y_i = 0$ otherwise.

The generalization error of the solution to the problem (\ref{eq:soft HFS}) is bounded in Lemma \ref{lem:transductive bound}. To simplify the proof, we assume that $c_l = 1$ and $c_l > c_u$.

\begin{lemma}
\label{lem:transductive bound} Let $\bell^\ast$ be a solution to the problem:
\begin{align*}
  \min_{\bell \in \realset^n} \
  (\bell - \by)\transpose C (\bell - \by) +
  \bell\transpose Q \bell,
\end{align*}
where $Q = L + \gamma_g I$ and all labeled examples $l$ are selected i.i.d. Then the inequality:
\begin{align*}
  R_P^{\scriptscriptstyle{W}}(\bell^\ast) \ \leq & \ \
  \widehat{R}_P^{\scriptscriptstyle{W}}(\bell^\ast) +
  \underbrace{\beta + \sqrt{\frac{2 \ln(2 / \delta)}{n_l}}
  (n_l \beta + 4)}_{\emph{transductive error }
  \Delta_T(\beta, n_l, \delta)} \\
  \beta \ \leq & \ \
  2 \left[\frac{\sqrt{2}}{\gamma_g + 1} +
  \sqrt{2 n_l} \frac{1 - \sqrt{c_u}}{\sqrt{c_u}}
  \frac{\lambda_M(L) + \gamma_g}{\gamma_g^2 + 1}\right]
\end{align*}
holds with probability $1 - \delta$, where:
\begin{align*}
  R_P^{\scriptscriptstyle{W}}(\bell^\ast)
  \ = & \ \
  \frac{1}{n} \sum_i (\ell_i^\ast - y_i)^2 \\
  \widehat{R}_P^{\scriptscriptstyle{W}}(\bell^\ast)
  \ = & \ \
  \frac{1}{n_l} \sum_{i \in l} (\ell_i^\ast - y_i)^2
\end{align*}
are risk terms for all and labeled vertices, respectively, and $\beta$ is the stability coefficient of the solution $\bell^\ast$.
\end{lemma}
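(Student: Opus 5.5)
The plan is to reduce the claim to the transductive stability bound of Cortes et al.\ (2008) and then estimate the stability coefficient $\beta$ for the regularized operator $Q = L + \gamma_g I$. Their Theorem states that for any algorithm whose soft labels change by at most $\beta$ in each coordinate when one labeled example is replaced, and whose outputs and targets are bounded so that the squared loss is bounded by 4, the inequality $R_P^{\scriptscriptstyle{W}}(\bell^\ast) \leq \widehat{R}_P^{\scriptscriptstyle{W}}(\bell^\ast) + \beta + \sqrt{2\ln(2/\delta)/n_l}\,(n_l\beta + 4)$ holds with probability $1-\delta$. This is obtained from McDiarmid's inequality applied to the difference of the two risks, with the i.i.d.\ selection of $l$. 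So the first step is to cite this bound verbatim. I will then check that its hypotheses hold for our problem, namely $|\ell_i^\ast| \leq 1$ and $|y_i| \leq 1$. This follows because the minimizer is $\bell^\ast = (C^{-1}Q + I)^{-1}\by$ and the matrix $(C^{-1}Q + I)^{-1}$ has spectral norm at most 1. That leaves the bound on $\beta$.

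For the stability, I will follow the structure of the proof of Cortes et al. Replacing one labeled point changes both $\by$ and $C$: a vertex switches from weight $c_l = 1$ to $c_u$, and another switches the other way. Write the two solutions as $\bell = (C^{-1}Q + I)^{-1}\by$ and $\bell' = (C'^{-1}Q + I)^{-1}\by'$, and split the difference as
\begin{align*}
\bell - \bell' = (C^{-1}Q+I)^{-1}(\by - \by') + \left[(C^{-1}Q+I)^{-1} - (C'^{-1}Q+I)^{-1}\right]\by'.
\end{align*}
The first term has $\normw{\by-\by'}{2} \leq \sqrt{2}$, since at most two coordinates change and each change has size at most 1. Its operator norm is controlled by the smallest eigenvalue of $C^{-1}Q + I$. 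Because $C^{-1} \succeq I$ when $c_l = 1 > c_u$, and because $\lambda_m(Q) \geq \gamma_g$, this smallest eigenvalue is at least $\gamma_g + 1$. That yields the $\sqrt{2}/(\gamma_g+1)$ term.

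For the second term, I will use the identity $A^{-1} - B^{-1} = A^{-1}(B - A)B^{-1}$ with $B - A = (C'^{-1} - C^{-1})Q$. The matrix $C'^{-1} - C^{-1}$ has two nonzero entries of size $1/c_u - 1$. Combining this with $\normw{Q}{2} = \lambda_M(L) + \gamma_g$ and $\normw{\by'}{2} \leq \sqrt{n_l}$, and with bounds on the two resolvents in terms of $\gamma_g$, should give $\sqrt{2n_l}\,\frac{1-\sqrt{c_u}}{\sqrt{c_u}}\frac{\lambda_M(L)+\gamma_g}{\gamma_g^2+1}$. The overall factor 2 would come from the two-sided change (a point removed and a point added), or equivalently from bounding both resolvents and the $\ell_\infty$ difference by the $\ell_2$ difference.

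The main obstacle is obtaining exactly the $\sqrt{c_u}$ and $\gamma_g^2+1$ forms. I expect to need a symmetric reparametrization: work with $C^{-1/2}QC^{-1/2} + I$, for which $\bell = C^{1/2}(C^{1/2}... )$ type expressions arise. Then the perturbation $C'^{1/2} - C^{1/2}$ has entries of size $1 - \sqrt{c_u}$, and the scaling by $C^{-1/2}$ contributes $1/\sqrt{c_u}$. The denominator $\gamma_g^2 + 1$ should come from bounding a product of two symmetric resolvents, each with spectrum at least $\gamma_g$ plus 1, from below, for instance by $(\gamma_g+1)^2 \geq \gamma_g^2 + 1$. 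I would try this symmetric form first and accept a slightly looser constant if needed, since only the stated upper bound is required.
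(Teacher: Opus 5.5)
\textbf{Gap in the derivation of $\beta$.} Your outer structure matches the paper's: a stability-based risk bound whose constant $4$ uses $\abs{y_i},\abs{\ell_i^\ast}\le 1$, plus the Cortes et al.\ stability coefficient, finished by $(\gamma_g+1)^2\ge\gamma_g^2+1$. The problem is your self-contained derivation of $\beta$, which fails as written.

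The matrix $C^{-1}Q+I=C^{-1}(C+Q)$ is not symmetric. Its eigenvalues are indeed at least $\gamma_g+1$, but that does not control its inverse in the Euclidean operator norm. Both of your claims are therefore false: $\normw{(C^{-1}Q+I)^{-1}}{2}\le 1$, and the bound of $1/(\gamma_g+1)$ on the operator norm in the first term.

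A counterexample: take two vertices joined by an edge of weight $a\to\infty$, with vertex 1 labeled and $\gamma_g,c_u\to 0$. Then $(C^{-1}Q+I)^{-1}e_1\to(1,1)\transpose/(1+c_u+2\gamma_g)$, whose Euclidean norm is about $\sqrt 2$. A heavy clique on $n$ vertices gives about $\sqrt n$.

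Moreover, even a Euclidean contraction would only give $\normw{\bell^\ast}{2}\le\sqrt{n_l}$, not $\abs{\ell_i^\ast}\le 1$. The right tool for both claims is the discrete maximum principle in $\ell_\infty$. Let $v=(C+Q)^{-1}Cu$ and let $m$ maximize $\abs{v_m}$. Row $m$ of $(C+Q)v=Cu$ gives $(C_{mm}+\gamma_g)\abs{v_m}\le C_{mm}\abs{u_m}$. Since $C_{mm}\le 1$, this yields $\normw{v}{\infty}\le\normw{u}{\infty}/(1+\gamma_g)$. That gives $\abs{\ell_i^\ast}\le 1$, which the paper simply assumes. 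It also gives your first term, with an even better constant.

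\textbf{Second term and comparison with the paper.} The second term is never actually derived. You only say the pieces ``should give'' $\sqrt{2n_l}\,\frac{1-\sqrt{c_u}}{\sqrt{c_u}}\,\frac{\lambda_M(Q)}{(\lambda_m(Q)+1)^2}$. The symmetric reparametrization $\bell=C^{-1/2}(C^{-1/2}QC^{-1/2}+I)^{-1}C^{1/2}\by$ naturally brings extra factors of $\normw{C^{-1/2}}{2}=1/\sqrt{c_u}$ into Euclidean estimates, so it is unclear that it lands on that expression. ``Accepting a slightly looser constant'' would not prove the inequality as stated. Your two explanations for the overall factor 2 are also only guesses.

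The paper avoids all of this by not re-proving the stability estimate. It takes Proposition 1 of Cortes et al.\ as given:
$\beta=2\bigl[\frac{\sqrt2}{\lambda_m(Q)+1}+\sqrt{2n_l}\,\frac{1-\sqrt{c_u}}{\sqrt{c_u}}\,\frac{\lambda_M(Q)}{(\lambda_m(Q)+1)^2}\bigr]$.
The lemma then reduces to the spectral shift $\lambda_m(Q)=\lambda_m(L)+\gamma_g\ge\gamma_g$ and $\lambda_M(Q)=\lambda_M(L)+\gamma_g$, together with $(\lambda_m(L)+\gamma_g+1)^2\ge\gamma_g^2+1$. Your last paragraph already anticipates that final step.

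Also, the risk inequality with the $(n_l\beta+4)$ term is what the paper takes from Theorem 1 of Belkin et al.\ (2004), not from Cortes et al. The paper uses Cortes et al.\ only for $\beta$.
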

{\bf Proof:} Our risk bound follows from combining Theorem 1 of \shortciteA{belkin04regularization} with the assumptions $\abs{y_i} \leq 1$ and $\abs{\ell_i^\ast} \leq 1$. The coefficient $\beta$ is derived based on Section 5 of \shortciteA{cortes08stability}. In particular, based on the properties of the matrix $C$ and Proposition 1 \shortcite{cortes08stability}, we conclude:
\begin{align*}
  \beta = 2 \left[\frac{\sqrt{2}}{\lambda_m(Q) + 1} +
  \sqrt{2 n_l} \frac{1 - \sqrt{c_u}}{\sqrt{c_u}}
  \frac{\lambda_M(Q)}{(\lambda_m(Q) + 1)^2}\right],
\end{align*}
where $\lambda_m(Q)$ and $\lambda_M(Q)$ refer to the smallest and largest eigenvalues of $Q$, respectively, and can be further rewritten as $\lambda_m(Q) = \lambda_m(L) + \gamma_g$ and $\lambda_M(Q) = \lambda_M(L) + \gamma_g$. Our final claim directly follows from applying the lower bounds $\lambda_m(L) \geq 0$ and $(\lambda_m(L) + \gamma_g + 1)^2 \geq \gamma_g^2 + 1$. \qed

\bigskip Lemma \ref{lem:transductive bound} is practical when the error $\Delta_T(\beta, n_l, \delta)$ decreases at the rate of $O(n_l^{- \frac{1}{2}})$. This is achieved when $\beta \! = \! O(1 / n_l)$, which corresponds to $\gamma_g \! = \! \Omega(n_l^\frac{3}{2})$. Thus, when the problem (\ref{eq:soft HFS}) is sufficiently regularized, its solution is stable, and the generalization error of the solution is bounded.

Lemmas \ref{lem:inductive bound} and \ref{lem:transductive bound} can be combined using the union bound.

\begin{proposition}
\label{prop:MMGC bound} Let $f$ be from a function class with the VC dimension $h$. Then the inequality:
\begin{align*}
  R_P(f)
  \ \leq & \ \
  \frac{1}{n} \sum_i \cL(f(\bx_i), \sgn(\ell_i^\ast)) \ + \\
  & \ \
  \widehat{R}_P^{\scriptscriptstyle{W}}(\bell^\ast) +
  \Delta_T(\beta, n_l, \delta) + \Delta_I(h, n, \eta)
\end{align*}
holds with probability $1 - (\eta + \delta)$.
\end{proposition}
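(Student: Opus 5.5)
The plan is to chain Lemma \ref{lem:inductive bound} and Lemma \ref{lem:transductive bound} through the common quantity $\frac{1}{n} \sum_i (\ell_i^\ast - y_i)^2$. This quantity appears as the second term on the right-hand side of Lemma \ref{lem:inductive bound}. It is also exactly the all-vertex risk $R_P^{\scriptscriptstyle{W}}(\bell^\ast)$ on the left-hand side of Lemma \ref{lem:transductive bound}. Here $\bell^\ast$ is understood throughout as the solution of the relaxed, regularized problem with $Q = L + \gamma_g I$, as the paper adopts after Lemma \ref{lem:inductive bound}. Lemma \ref{lem:inductive bound} holds for any soft labels $\ell_i^\ast$, since its pointwise inequality $\cL(f(\bx_i), y_i) \leq \cL(f(\bx_i), \sgn(\ell_i^\ast)) + (\ell_i^\ast - y_i)^2$ is valid for arbitrary real $\ell_i^\ast$. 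So it applies verbatim to this $\bell^\ast$.

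First I will define two failure events. Let $A$ be the event that the inequality of Lemma \ref{lem:inductive bound} fails; it has probability at most $\eta$ over the i.i.d. draw of the $\bx_i$. Let $B$ be the event that the inequality of Lemma \ref{lem:transductive bound} fails; it has probability at most $\delta$ over the i.i.d. selection of the labeled set $l$. By the union bound, $P(A \cup B) \leq \eta + \delta$, so both inequalities hold simultaneously with probability at least $1 - (\eta + \delta)$.

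On the complement of $A \cup B$, I will substitute the bound
\begin{align*}
  \frac{1}{n} \sum_i (\ell_i^\ast - y_i)^2 = R_P^{\scriptscriptstyle{W}}(\bell^\ast) \leq \widehat{R}_P^{\scriptscriptstyle{W}}(\bell^\ast) + \Delta_T(\beta, n_l, \delta)
\end{align*}
into the middle term of Lemma \ref{lem:inductive bound}. This yields exactly the stated inequality, with the remaining term $\Delta_I(h, n, \eta)$ carried over unchanged.

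The main subtlety is probabilistic bookkeeping rather than computation. The two lemmas are stated with respect to different sources of randomness: the sample of points, and the random choice of which points are labeled. I will argue that a union bound needs no independence, only that both events are defined on a common probability space, namely the joint draw of points and labeled subset. Each lemma's guarantee holds marginally, or conditionally on the points for the transductive part, which is enough for $P(A) \leq \eta$ and $P(B) \leq \delta$. A second point to check is that $\bell^\ast$ depends on the sample, but this is harmless. Lemma \ref{lem:inductive bound} bounds $R_P(f)$ for all $f$ in the class uniformly through the VC argument, and the soft-label substitution is a deterministic pointwise inequality.
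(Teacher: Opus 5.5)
Your proposal is correct and matches the paper's argument. The paper's justification is the single remark that Lemmas~1 and~2 combine via the union bound: substitute Lemma~2's bound on $\frac{1}{n}\sum_i (\ell_i^\ast - y_i)^2$ into Lemma~1 and add the failure probabilities $\eta$ and $\delta$. Your points about the common probability space and about Lemma~1 holding for the relaxed soft-label solution are details the paper leaves implicit.
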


\bigskip The above result can be viewed as follows. If both $n$ and $n_l$ are large, the sum of $\frac{1}{n} \sum_i \cL(f(\bx_i), \sgn(\ell_i^\ast))$ and $\widehat{R}_P^{\scriptscriptstyle{W}}(\bell^\ast)$ provides a good estimate of the risk $R_P(f)$. Unfortunately, our bound is not practical for setting $\gamma_g$ because it is hard to find $\gamma_g$ that minimizes both $\widehat{R}_P^{\scriptscriptstyle{W}}(\bell^\ast)$ and $\Delta_T(\beta, n_l, \delta)$. The same phenomenon was observed by \shortciteA{belkin04regularization} in a similar context. To solve our problem, we suggest setting $\gamma_g$ based on the validation set. This methodology is used in the experimental section.

\subsection{THRESHOLD $\eps$}
\label{sec:threshold eps}

Finally, note that when $\abs{\ell_i^\ast} < \eps$, where $\eps$ is a small number, $\abs{\ell_i^\ast - y_i}$ is close to 1 irrespective of $y_i$, and a trivial upper bound $\cL(f(\bx_i), y_i) \! \leq \! 1$ is almost as good as $\cL(f(\bx_i), y_i) \! \leq \! \cL(f(\bx_i), \sgn(\ell_i^\ast)) + (\ell_i^\ast - y_i)^2$ for any $f$. This allows us to justify the $\eps$ threshold in the problem (\ref{eq:MMGC}). In particular, note that $\cL(f(\bx_i), y_i)$ is bounded by $1 - (\ell_i^\ast - y_i)^2 + (\ell_i^\ast - y_i)^2$. When $\abs{\ell_i^\ast} < \eps$, $1 - (\ell_i^\ast - y_i)^2 < 2 \eps - \eps^2$, and we conclude the following.

\begin{figure*}[t]
  \centering
  \includegraphics[width=4.8in, bb=1.25in 3.5in 7.25in 7.5in]{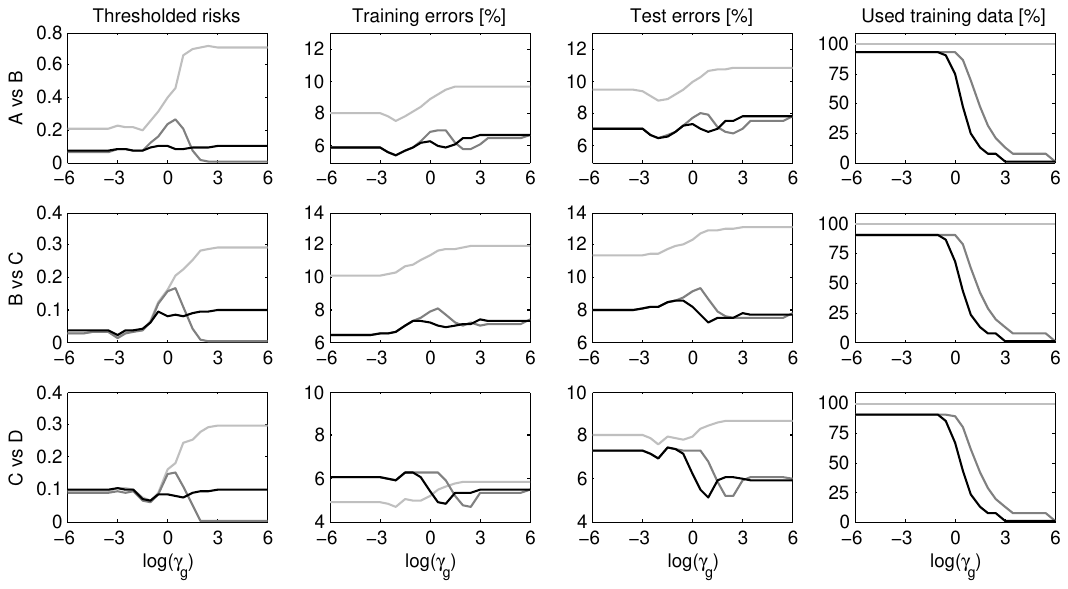}
  \caption{The thresholded empirical risk $\frac{1}{n} \sum_{i : \abs{\ell_i^\ast} \geq \eps} \cL(f^\ast(\bx_i), \sgn(\ell_i^\ast)) + \frac{2 \eps n_\eps}{n}$ of the optimal max-margin graph cut $f^\ast$ (\ref{eq:MMGC}), its training and test errors, and the percentage of training examples such that $\abs{\ell_i^\ast} \geq \eps$, on 3 letter recognition problems from the UCI ML repository. The plots are shown as functions of the parameter $\gamma_g$, and correspond to the thresholds $\eps = 0$ (light gray lines), $\eps = 10^{-6}$ (dark gray lines), and $\eps = 10^{-3}$ (black lines). All results are averaged over 50 random choices of 1 percent of labeled examples.}
  \label{fig:thresholded objective}
\end{figure*}

\begin{proposition}
\label{prop:MMGC thresholded bound} Let $f$ be from a function class with the VC dimension $h$ and $n_\eps$ be the number of examples such that $\abs{\ell_i^\ast} < \eps$. Then the inequality:
\begin{align*}
  R_P(f)
  \ \leq & \ \
  \frac{1}{n} \!\!\! \sum_{i : \abs{\ell_i^\ast} \geq \eps}
  \!\!\! \cL(f(\bx_i), \sgn(\ell_i^\ast)) +
  \frac{2 \eps n_\eps}{n} \ + \\
  & \ \
  \widehat{R}_P^{\scriptscriptstyle{W}}(\bell^\ast) +
  \Delta_T(\beta, n_l, \delta) + \Delta_I(h, n, \eta)
\end{align*}
holds with probability $1 - (\eta + \delta)$.
\end{proposition}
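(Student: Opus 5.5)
The plan is to redo the proof of Proposition~\ref{prop:MMGC bound}, changing only how the per-example zero-one losses $\cL(f(\bx_i), y_i)$ are bounded.

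\emph{Step 1 (inductive bound).} As in the proof of Lemma~\ref{lem:inductive bound}, the VC bound of \shortciteA{vapnik95nature} gives, with probability $1-\eta$,
\begin{align*}
  R_P(f) \leq \frac{1}{n} \sum_i \cL(f(\bx_i), y_i) + \Delta_I(h, n, \eta).
\end{align*}

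\emph{Step 2 (split the empirical sum).} Divide the indices by confidence.
\begin{itemize}
\item For $i$ with $\abs{\ell_i^\ast} \geq \eps$, keep the bound from Lemma~\ref{lem:inductive bound}: $\cL(f(\bx_i), y_i) \leq \cL(f(\bx_i), \sgn(\ell_i^\ast)) + (\ell_i^\ast - y_i)^2$.
\item For the $n_\eps$ indices with $\abs{\ell_i^\ast} < \eps$, write $\cL(f(\bx_i), y_i) \leq 1 = [1 - (\ell_i^\ast - y_i)^2] + (\ell_i^\ast - y_i)^2$.
\end{itemize}
Since $y_i \in \set{-1,1}$, we have $\abs{\ell_i^\ast - y_i} \geq 1 - \abs{\ell_i^\ast} > 1 - \eps$. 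Hence $1 - (\ell_i^\ast - y_i)^2 < 1 - (1-\eps)^2 = 2\eps - \eps^2 \leq 2\eps$. Summing over both groups and dividing by $n$ gives
\begin{align*}
  \frac{1}{n} \sum_i \cL(f(\bx_i), y_i) \leq \frac{1}{n} \sum_{i : \abs{\ell_i^\ast} \geq \eps} \cL(f(\bx_i), \sgn(\ell_i^\ast)) + \frac{2 \eps n_\eps}{n} + R_P^{\scriptscriptstyle{W}}(\bell^\ast).
\end{align*}
This inequality is deterministic.

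\emph{Step 3 (transductive bound and union bound).} Lemma~\ref{lem:transductive bound} gives, with probability $1-\delta$,
\begin{align*}
  R_P^{\scriptscriptstyle{W}}(\bell^\ast) \leq \widehat{R}_P^{\scriptscriptstyle{W}}(\bell^\ast) + \Delta_T(\beta, n_l, \delta).
\end{align*}
A union bound over the failure events of Step~1 and Step~3 makes both hold simultaneously with probability at least $1-(\eta+\delta)$. Chaining the three displays yields the claim.

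\emph{Main obstacle.} The only non-routine point is the $\abs{\ell_i^\ast} < \eps$ case. Here one must use the lower bound $\abs{\ell_i^\ast - y_i} > 1-\eps$, which relies on $\abs{y_i} = 1$, rather than any upper bound on $\ell_i^\ast$. One must also keep the $(\ell_i^\ast - y_i)^2$ term for these indices, so that all $n$ squared errors together form $R_P^{\scriptscriptstyle{W}}$ and Lemma~\ref{lem:transductive bound} applies unchanged.

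A secondary caveat is that Lemma~\ref{lem:transductive bound} is stated for the soft solution~(\ref{eq:soft HFS}). So $\bell^\ast$ must be interpreted as that solution throughout, exactly as in Proposition~\ref{prop:MMGC bound}, and it must satisfy $\abs{\ell_i^\ast}\le 1$, which Lemma~\ref{lem:transductive bound} already assumes.
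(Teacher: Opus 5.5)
Your proposal is correct and follows the paper's proof essentially step for step. Both use the VC bound, split the indices by $\abs{\ell_i^\ast} \geq \eps$, add and subtract $(\ell_i^\ast - y_i)^2$ on the uncertain indices so that all squared errors recombine into $R_P^{\scriptscriptstyle{W}}(\bell^\ast)$, and finish with Lemma~\ref{lem:transductive bound} and a union bound. Your justification of $1 - (\ell_i^\ast - y_i)^2 < 2\eps$ via $\abs{\ell_i^\ast - y_i} > 1 - \eps$ is more explicit than the paper's; squaring that inequality tacitly needs $\eps \leq 1$, but for $\eps > \frac{1}{2}$ the bound $1 - (\ell_i^\ast - y_i)^2 \leq 1 < 2\eps$ holds trivially anyway.
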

{\bf Proof:} The generalization bound is proved as:
\begin{align*}
  R_P(f)
  \ \leq & \ \ \widehat{R}_P(f) + \Delta_I(h, n, \eta) \\
  \ = & \ \
  \frac{1}{n} \!\!\! \sum_{i : \abs{\ell_i^\ast} \geq \eps}
  \!\!\! \cL(f(\bx_i), y_i) +
  \frac{1}{n} \!\!\! \sum_{i : \abs{\ell_i^\ast} < \eps}
  \!\!\! \cL(f(\bx_i), y_i) \ + \\
  & \ \ \Delta_I(h, n, \eta) \\
  \ \leq & \ \
  \frac{1}{n} \!\!\! \sum_{i : \abs{\ell_i^\ast} \geq \eps}
  \!\!\! \left[\cL(f(\bx_i), \sgn(\ell_i^\ast)) +
  (\ell_i^\ast - y_i)^2\right] + \\
  & \ \
  \frac{1}{n} \!\!\! \sum_{i : \abs{\ell_i^\ast} < \eps}
  \!\!\! \left[1 - (\ell_i^\ast - y_i)^2 +
  (\ell_i^\ast - y_i)^2\right] + \\
  & \ \ \Delta_I(h, n, \eta) \\
  \ = & \ \
  \frac{1}{n} \!\!\! \sum_{i : \abs{\ell_i^\ast} \geq \eps}
  \!\!\! \cL(f(\bx_i), \sgn(\ell_i^\ast)) \ + \\
  & \ \
  \frac{1}{n} \!\!\! \sum_{i : \abs{\ell_i^\ast} < \eps}
  \!\!\! \left[1 - (\ell_i^\ast - y_i)^2\right] +
  \frac{1}{n} \sum_i (\ell_i^\ast - y_i)^2 + \\
  & \ \ \Delta_I(h, n, \eta) \\
  \ \leq & \ \
  \frac{1}{n} \!\!\! \sum_{i : \abs{\ell_i^\ast} \geq \eps}
  \!\!\! \cL(f(\bx_i), \sgn(\ell_i^\ast)) +
  \frac{2 \eps n_\eps}{n} \ + \\
  & \ \
  \widehat{R}_P^{\scriptscriptstyle{W}}(\bell^\ast) +
  \Delta_T(\beta, n_l, \delta) + \Delta_I(h, n, \eta).
\end{align*}
The last step follows from the inequality $1 - (\ell_i^\ast - y_i)^2 < 2 \eps$ and Lemma \ref{lem:transductive bound}. \qed

\bigskip When $\eps \leq n_l^{- \frac{1}{2}}$, the new upper bound is asymptotically as good as the bound in Proposition \ref{prop:MMGC bound}. As a result, we get the same convergence guarantees although highly-uncertain labels $\abs{\ell_i^\ast} < \eps$ are excluded from our optimization.

In practice, optimization of the thresholded objective often yields a lower risk $\frac{1}{n} \sum_{i : \abs{\ell_i^\ast} \geq \eps} \cL(f^\ast(\bx_i), \sgn(\ell_i^\ast)) + \frac{2 \eps n_\eps}{n}$, and also lower training and test errors. This is a result of excluding the most uncertain examples \mbox{$\abs{\ell_i^\ast} \! < \! \eps$ from learning.} Figure \ref{fig:thresholded objective} illustrates these trends on three learning problems. Note that the parameters $\gamma_g$ and $\eps$ are redundant in the sense that the same result is often achieved by different combinations of parameter values. This problem is addressed in the experimental section by fixing $\eps$ and optimizing $\gamma_g$ only.

\section{EXPERIMENTS}
\label{sec:experiments}

The experimental section is divided into two \mbox{parts. The first} part compares max-margin graph cuts to manifold regularization of SVMs on the problem from Figure \ref{fig:HFS}. The second part compares max-margin graph cuts, manifold regularization of SVMs, and supervised learning with SVMs on three UCI ML repository datasets \shortcite{ucimlrepository}.

Manifold regularization of SVMs is evaluated based on the implementation of \shortciteA{belkin06manifold}. Max-margin graph cuts and SVMs are implemented using LIBSVM \shortcite{chang01libsvm}.

\subsection{SYNTHETIC PROBLEM}
\label{sec:synthetic experiments}

The first experiment (Figure \ref{fig:synthetic cuts}) illustrates linear, cubic, and RBF graph cuts (\ref{eq:MMGC}) on the synthetic problem from Figure \ref{fig:HFS}. The cuts are shown for various settings of the regularization parameter $\gamma_g$. As $\gamma_g$ decreases, note that the cuts gradually interpolate between supervised learning on just two labeled examples and semi-supervised learning on all data. The resulting discriminators are max-margin decision boundaries that separate the corresponding colored \mbox{regions in Figure \ref{fig:HFS}.}

Figure \ref{fig:synthetic cuts} also shows that manifold regularization of SVMs (\ref{eq:MR}) with linear and cubic kernels cannot perfectly separate the two clusters in Figure \ref{fig:HFS} for any setting of the parameter $\gamma_u$. The reason for this problem is discussed in Section \ref{sec:MR fails}. Finally, note the similarity between max-margin graph cuts and manifold regularization of SVMs with the RBF kernel. This similarity was suggested in Section \ref{sec:MR}.

\subsection{UCI ML REPOSITORY DATASETS}
\label{sec:UCI ML repository experiments}

\begin{figure*}[t]
  \centering
  {\small
  \begin{tabular}{l r|r r r|r r r|r r r} \hline \hline
    & & \multicolumn{9}{|c}{Misclassification errors [\%]} \\ \cline{3-11}
    Dataset & $L$ & \multicolumn{3}{|c|}{Linear kernel} &
    \multicolumn{3}{|c|}{Cubic kernel} & \multicolumn{3}{|c}{RBF kernel} \\
    & & SVM & MR & GC & SVM & MR & GC & SVM & MR & GC \\ \hline
    &  1 & 18.90 & 30.94 & {\bf 15.79} & 20.54 & 25.96 & {\bf 17.45} & 20.06 & 17.61 & {\bf 16.01} \\
    Letter
    &  2 & 12.92 & 28.45 & {\bf 10.79} & 12.18 & 18.34 & {\bf 10.90} & 13.52 & 13.10 & {\bf 11.83} \\
    recognition
    &  5 &  8.21 & 27.13 & {\bf  5.65} &  5.49 & 18.77 & {\bf  4.80} &  6.81 &  8.06 & {\bf  5.65} \\
    & 10 &  6.51 & 25.45 & {\bf  3.96} &  4.17 & 14.03 & {\bf  2.96} &  4.95 &  6.14 & {\bf  3.32} \\ \hline
    &  1 &  7.06 &  9.59 & {\bf  6.88} &  9.62 & {\bf  5.29} &  8.55 &  8.22 & {\bf  6.36} &  7.65 \\
    Digit
    &  2 &  4.87 &  7.97 & {\bf  4.60} &  6.06 & {\bf  5.06} &  5.09 &  6.17 & {\bf  4.21} &  5.61 \\
    recognition
    &  5 &  2.97 &  3.68 & {\bf  2.29} &  3.04 & {\bf  2.27} &  2.36 &  2.74 &  2.29 & {\bf  2.19} \\
    & 10 &  1.70 &  2.86 & {\bf  1.59} &  1.87 & {\bf  1.60} &  1.74 &  1.68 &  1.75 & {\bf  1.35} \\ \hline
    &  1 & 14.02 & 11.81 & {\bf 10.27} & 23.30 & {\bf 12.02} & 14.10 & 14.02 & 11.60 & {\bf  9.51} \\
    Image
    &  2 &  8.54 & 10.87 & {\bf  7.69} & 14.28 & 13.07 & {\bf  7.73} &  9.06 &  8.93 & {\bf  7.34} \\
    segmentation
    &  5 &  4.73 &  7.83 & {\bf  4.49} &  8.32 &  8.79 & {\bf  7.17} &  5.87 &  5.43 & {\bf  5.31} \\
    & 10 &  3.30 &  6.26 & {\bf  3.28} &  3.65 &  6.64 & {\bf  3.60} &  3.84 &  4.81 & {\bf  3.73} \\
    \hline \hline
  \end{tabular}
  }
  \caption{Comparison of SVMs, max-margin graph cuts (GC), and manifold regularization of SVMs (MR) on three datasets from the UCI ML repository. The fraction of labeled examples $L$ varies from 1 to 10 percent.}
  \label{fig:UCI ML repository cuts}
\end{figure*}

The second experiment (Figure \ref{fig:UCI ML repository cuts}) shows that max-margin graph cuts (\ref{eq:MMGC}) typically outperform manifold regularization of SVMs (\ref{eq:MR}) and supervised learning with SVMs. The experiment is done on three UCI ML repository datasets: letter recognition, digit recognition, and image segmentation. The datasets are multi-class and thus, we transform each of them into a set of binary classification problems. The digit recognition and image segmentation datasets are converted into 45 and 15 problems, respectively, where all classes are discriminated against every other class. The letter recognition dataset is turned into 25 problems that involve pairs of consecutive letters. Each dataset is divided into three folds. The first fold is used for training, the second one for selecting the parameters $\gamma \! \in \! [0.01, 0.1] n_l$, \mbox{$\gamma_u \! \in \! [10^{-3}, 10^3] \gamma$, and} $\gamma_g = \gamma / \gamma_u$, and the last fold is used for testing.\footnote{Alternatively, the regularization parameters $\gamma$, $\gamma_u$, and $\gamma_g$ can be set using leave-one-out cross-validation on labeled examples.} The fraction of labeled examples in the training set is varied from 1 to 10 percent. All examples in the validation set are labeled and its size is limited to the number of labeled examples in the training set.

In all experiments, we use 5-nearest neighbor graphs whose edges are weighted as $w_{ij} = \exp[- \normw{\bx_i - \bx_j}{2}^2 / (2 K \sigma^2)]$, where $K$ is the number of features and $\sigma$ denotes the mean of their standard deviations. The width of radial basis functions (RBFs) is set accordingly to $\sqrt{K} \sigma$, and the threshold $\eps$ for choosing training examples (\ref{eq:MMGC}) is $10^{-6}$.

Test errors of all compared algorithms are averaged over all binary problems within each dataset and shown in Figure \ref{fig:UCI ML repository cuts}. Max-margin graph cuts outperform manifold regularization of SVMs in 29 out of 36 experiments. Note that the lowest errors are usually obtained for linear and cubic kernels, and our method improves the most over manifold regularization of SVMs in these settings.

\section{CONCLUSIONS}
\label{sec:conclusions}

This paper proposes a novel algorithm for semi-supervised learning. The algorithm learns max-margin graph cuts that are conditioned on the labels induced by the harmonic function solution. We motivate the approach, prove its generalization bound, and compare it to state-of-the-art algorithms for semi-supervised max-margin learning. The approach is evaluated on a synthetic problem and three UCI ML repository datasets, and we show that it usually outperforms manifold regularization of SVMs.

In our future work, we plan to investigate some of the shortcomings of this paper. For instance, note that the theoretical analysis of max-margin graph cuts (Section \ref{sec:theoretical analysis}) assumes soft labels but our solutions (\ref{eq:MMGC}) are computed using the hard labels $\ell_i = y_i$. Whether the theoretically sound setting yields better results in practice is an open question.

\bibliographystyle{theapa}
\bibliography{References}

\end{document}